\title{\LARGE \bf
Improved PAC-Bayesian Bounds for Linear Regression
}
\author[1]{Vera Shalaeva}
\author[2]{Alireza Fakhrizadeh Esfahani}
\author[3]{Pascal Germain}
\author[4]{Mihaly Petreczky}
\affil[1,3]{\small The MODAL project-team, INRIA Lille Nord-Europe, France \protect\\ \{vera.shalaeva, pascal.germain\}@inria.fr}
\affil[2,4]{Univ. Lille, CNRS, Centrale Lille, UMR 9189, \protect\\ CRIStAL - Centre de Recherche en Informatique Signal et Automatique de Lille, France \protect\\ alireza.fakhrizadeh@univ-lille.fr,
mihaly.petreczky@centralelille.fr }
\date{}
\begin{document}

\maketitle
\thispagestyle{empty}
\pagestyle{empty}

\begin{abstract}
In this paper, we improve the PAC-Bayesian error bound for linear regression derived in 
\citet{nips-16}.
The improvements are two-fold. First, the proposed error bound is tighter, and converges to the generalization loss with a well-chosen temperature parameter. Second, the error bound also holds for training data that are not independently sampled. In particular, the error bound applies to certain time series generated by well-known classes of dynamical models, such as ARX models.
\end{abstract}

\section{Introduction}
 
When facing a machine learning problem, one must be careful to avoid overfitting the training dataset. Indeed, it is well known that minimizing the empirical prediction error is not sufficient to generalize to future observations. 
This is especially important for sensitive ``AI'' applications that are nowadays tackled by many industries (self-driving vehicles, health diagnosis, personality profiling, to name a few).
Statistical learning theories study the generalization properties of learning algorithms. For the prediction problems, they provide guarantees on the ``true'' error of machine learning predictors (\ie, the probability of erroneously predicting the labels of not seen yet samples).

The PAC-Bayesian learning theory, initiated by David McAllester (\citeyear{mcallester-99,mcallester-03b})---see \citet{guedj2019primer} for a recent survey---, has the particularity of providing computable ``non-vacuous'' generalization bounds on popular machine learning algorithms, such as neural networks \cite{Dziugaite2017} and SVMs \cite{ambroladze-06}. Moreover, as its name suggests, PAC-Bayesian framework bridges the \emph{frequentist} Probably Approximately Correct theory and the \emph{Bayesian} inference. This topic is namely discussed in \citet{zhang-06}, \citet{grunwald-2012}, \citet{alquier-15}, \citet{nips-16}, \citet{ShethK17}.

In this paper, we build on a result of \citet{nips-16}, which analyses the Bayesian linear regression from a PAC-Bayesian perspective, leading to generalization bounds for the squared loss. We improve the preceding results in two directions.  
First, our new generalization bound is tighter than the one of \citet{nips-16}, and converges to the generalization loss for proper parameters (see Section~\ref{sec:bounds1}). Second, our result holds for training data that are not independently sampled (see Section~\ref{extension}). 
The latter result is directly applicable to the problem of learning dynamical systems from time series data, in particular, to learning ARX models. ARX models are a popular class of dynamical systems with a rich literature
\cite{LjungBook,HannanBook} due to their relative simplicity and 
modelling power. Note that ARX models can be viewed as a simple yet non-trivial subclass of recurrent neural network regressions. 
For example, just like general recurrent neural networks, ARX models have a memory, \ie, they are able to remember past input data. 

Noteworthy, \citet{alquier2012model} proposed PAC-Bayesian oracle inequalities to perform model selection on different time series (weakly dependent processes and causal Bernoulli shifts). 
Thus, their work is complementary to ours, as it  relies on different assumptions and focuses on other types of error bounds. 

\section{PAC-Bayesian Learning}
Let us consider a supervised learning setting, where a learning algorithm is given a training set $S=\{(x_i,y_i)\}^n_{i=1}$ of size~$n$. Each pair $(x_i,y_i)$ links a \emph{description} $x_i\in\Xcal$ to a \emph{label} $y_i \in \Ycal$. Typically, the description is encoded by a real-valued vector ($\Xcal \subseteq \Rbb^d$), and the label is a scalar ($\Ycal\subseteq\Nbb$ for classification problems, or $\Ycal\subseteq \Rbb$ for regression ones). Given $S$, the learning algorithm returns a prediction function $f: \Xcal\to\Ycal$, also referred to as a \emph{hypothesis}. 
We restrict attention to prediction functions/hypotheses that are measurable. 
The ``quality'' of the predictor $f$ is usually assessed through a measurable loss function
$\ell:\Ycal\times \Ycal\to\Rbb$---such as the zero-one loss $\ell(y,y') = \mathbf{1}_{y\neq y'}$ in classification context, or the squared loss $\ell(y,y') = (y-y')^2$ in regression context, by evaluating the \emph{empirical loss}
\begin{equation*}
    \emploss(f)(S) = \frac1n \sum_{i=1}^n \ell (f(x_i), y_i)\,, \quad \mbox{for any $S$.}
\end{equation*}

\paragraph{PAC Learning.}
When facing a machine learning problem, one wants to use $f$ to predict the label $y\in\Ycal$ from a description $x\in\Xcal$ that does not belong to the training set $S$. 
A good predictor ``generalize to unseen data''. 
This is the object of study of the Probably Approximately Correct (PAC) approach \citep{valiant-84}.

In order to study the statistical behavior of the average loss, we introduce the following statistical framework.
We fix a probability space $(\Omega,\ProbData,\SigmaData)$, where $\SigmaData$ is a $\sigma$-algebra over $\Omega$ and 
$\ProbData$ is a probability measure on $\SigmaData$, see for example \citet{Bilingsley} for the terminology. 
We assume that there exist random variables
$\bX_i:\Omega \rightarrow \Xcal$, $\bY_i:\Omega \rightarrow \Ycal$, $i=1,2,\ldots, $, such that 
the description-label pairs $\{(x_i,y_i)\}_{i=1}^{n}$ are samples from the first $n$ variables $\{(\bX_i,\bY_i)\}_{i=1}^{n}$ 
and there exists  $\omega \in \Omega$ such that $x_i=\bX_i(\omega)$, $y_i=\bY_i(\omega)$. 
Moreover, we assume that $\bX_i,\bY_i$ are identically distributed, \ie{} $\EspData g(\bX_i,\bY_i)$ does not depend on $i$ for any
measurable function $g$. 
\begin{notation}[$\EspData$]
We will use $\EspData$  to denote expected value with respect to the measure $\ProbData$.
\end{notation}
 That is, in the sequel, boldface symbols $\mathbf{P}$ and $\mathbf{E}$ will denote the probability and the corresponding mathematical expectation for the data generating distribution, and we will use boldface to denote random variables on the probability space  $(\Omega,\ProbData,\SigmaData)$  and simple font for their samples. 
 As we will see later on, we will also use a probability measure and the corresponding mathematical
expectation defined on the space of predictors, which will be denoted differently. 

The \emph{generalization loss} of a predictor $f$ is then defined as
\begin{equation*}
    \genloss(f) = \EspData \ell (f(\bX_i), \bY_i)\,,
\end{equation*}
and it expresses the average error for ``unseen data''. It is then of interest to compare this error with the average
empirical error, where the average is taken over all possible samples. To this end, we define the random variable
\begin{equation*}
    \emplossStat(f) = \frac1n \sum_{i=1}^n \ell (f(\bX_i), \bY_i)\,,
\end{equation*}
\ie,  for any sample $S=\{(x_i,y_i)=(\bX_i(\omega),\bY_i(\omega)\}_{i=1}^{n}$ of $\{(\bX_i,\bY_i)\}_{i=1}^n$, 
$\emploss(f)(S)=\emplossStat(f)(\omega)$ is a sample of the random variable $\emplossStat(f)$.
By slight abuse of terminology, we will refer to $\emplossStat(f)$ as the \emph{empirical loss} too. 
PAC theories provide upper bounds of the form
\begin{equation*}
    \ProbData \left( \genloss(f) \leq \emplossStat(f) + \varepsilon \right) \,\geq\, 1-\delta\,,
\end{equation*}
where $\delta\in(0,1]$ acts as a ``confidence'' parameter; the whole challenge of the PAC theories is to derive the mathematical expression of $\varepsilon$. 
Among the various approaches proposed to achieve this goal (reviewed in \citet{shalev-14}), we can mention VC-dimension, sample compression, Rademacher's complexity, algorithmic stability, and the PAC-Bayesian theory. 
In the current work, we stand in the PAC-Bayesian learning framework.

\paragraph{PAC-Bayes.} 
The PAC-Bayesian learning framework \citep{mcallester-99,mcallester-03b} has the particularity of reconciling the PAC learning standpoint with the Bayesian paradigm. To be more precise, let us define a $\sigma$-algebra $\SigmaModel$  on the set of predictors $\Fcal$.\footnote{Note that $\SigmaModel$ is completely different from the $\sigma$-algebra $\mathbf{F}$ of the probability space for which the data generating random variables $\bX_i,\bY_i$ are defined. This is not surprising, as the randomness of the data represents an assumption on the nature of the process which generates the data, while $\SigmaModel$ will be used to define probability distributions, which express our subjective preferences for certain predictors, and which will be adjusted based on the observed data.}
\begin{notation}[$\EspModel_{f \sim \rho}$]
 If $\rho$ is a probability distribution function on $\Fcal$, in the sequel we denote by
 $\EspModel_{f \sim \rho}$ the mathematical expectation with respect to the probability measure which corresponds to $\rho$. 
\end{notation}
In the PAC-Bayesian paradigm,
we consider a \emph{prior} probability distribution $\prior$ and a \emph{posterior} probability distribution $\post$ over this $\sigma$-algebra. 
The prior must be chosen independently of the training set $S$, and the learning algorithm role is to output the posterior distribution, instead of a single predictor. The PAC-Bayesian bounds take the form\footnote{Contrary to the example we give here, the relation between the expected empirical loss and the term $\varepsilon$ might be non-linear. This is the case of the famous PAC-Bayes theorem of \citet{seeger-02}.} 
\begin{equation*}
    \ProbData \left( \EspModel_{f\sim\post}\genloss(f) \leq \EspModel_{f\sim\post}\emplossStat(f) + \varepsilon \right) \,\geq\, 1-\delta\,.
\end{equation*}
That is, in the PAC-Bayesian setting, the study focuses on the $\post$-averaged loss.\footnote{The PAC-Bayesian literature also studies the stochastic \emph{Gibbs predictor}, that perform each prediction on $x\in\Xcal$ by drawing $f$ according to $\post$ and outputting $f(x)$ (\eg, \citet{graal-neverending}). } 
Typically, the term $\varepsilon$ takes into account the prior via the Kullback-Leibler divergence:
\begin{equation*}
    \KL(\post\|\prior) = \EspModel_{f\sim\post} \frac{\post(f)}{\prior(f)}\,.
\end{equation*}
Note that KL-diverence is defined only if $\post$ is absolutely continuous with respect to $\prior$.

In this paper, we build on the PAC-Bayesian theorem of  \citet{alquier-15}, which is also the starting point of \citet{nips-16} result improved in upcoming sections.
\begin{thm}[\citet{alquier-15}] \label{thm:general-alquier}
	Given a set $\Fcal$ of measurable hypotheses $\Xcal\to\Ycal$, a measurable
	loss function $\loss:\Ycal\times\Ycal\to\Rbb$,   a prior distribution $\prior$ over $\Fcal$, a $\delta \in (0,1]$, and a real number $\lambda>0$,  
	$\forall \post \text{ over } \Fcal\colon$
	\begin{align} \label{eq:alquier}
        \ProbData\Bigg( &
    \EspModel_{f\sim\post} \genloss(f) 
    	 \le \  \EspModel_{f\sim\post} \emplossStat(f)   \\[-2mm]
    	 &\   + \dfrac{1}{\lambda}\!\left[ \KL(\post\|\prior) +
    	\ln\dfrac{1}{\delta}
    	+ \Psi_{\ell,\prior}(\lambda,n)  \right] \Bigg) \geq 1-\delta \,, \nonumber
    \end{align}
    \begin{equation}\label{eq:alquier_assumption}
    \mbox{where}\qquad\Psi_{\ell,\prior}(\lambda,n) = 
    \ln	\EspModel_{f\sim\prior} \EspData
 	e^{\lambda\left(\genloss(f) - \emplossStat(f)\right)}\,.
	\end{equation}
\end{thm}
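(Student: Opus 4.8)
The plan is to decompose the argument into a \emph{deterministic} change-of-measure step and a \emph{single} probabilistic step, arranged so that the quantifier ``$\forall\post$'' can legitimately live inside the probability $\ProbData(\cdot)$.

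First I would invoke the Donsker--Varadhan variational inequality (the change-of-measure inequality underlying essentially all PAC-Bayesian bounds): for any measurable $h\colon\Fcal\to\Rbb$ and any distribution $\post$ absolutely continuous with respect to $\prior$,
\begin{equation*}
    \EspModel_{f\sim\post} h(f) \;\le\; \KL(\post\|\prior) + \ln \EspModel_{f\sim\prior} e^{h(f)}\,.
\end{equation*}
I would apply this with $h(f) = \lambda\big(\genloss(f)-\emplossStat(f)\big)$. The key observation is that $\genloss(f)$ is deterministic while the data enter only through $\emplossStat(f)$, so for each fixed outcome $\omega$ this is a deterministic inequality in $f$, valid \emph{simultaneously for every} $\post$. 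Rearranging gives, for every $\omega$ and every $\post$,
\begin{equation*}
    \lambda\,\EspModel_{f\sim\post}\big(\genloss(f)-\emplossStat(f)\big) \le \KL(\post\|\prior) + \ln Z\,,
\end{equation*}
where $Z := \EspModel_{f\sim\prior} e^{\lambda(\genloss(f)-\emplossStat(f))}$ is a random variable that crucially does \emph{not} depend on $\post$.

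The second step is to control $Z$ with one application of Markov's inequality. By Tonelli's theorem I would exchange the prior expectation and $\EspData$ to obtain $\EspData Z = \EspModel_{f\sim\prior}\EspData\, e^{\lambda(\genloss(f)-\emplossStat(f))} = e^{\Psi_{\ell,\prior}(\lambda,n)}$. Since $Z\ge 0$, Markov's inequality yields $\ProbData\big(Z \ge \tfrac1\delta\,\EspData Z\big)\le\delta$, so with probability at least $1-\delta$ we have $\ln Z \le \ln\tfrac1\delta + \Psi_{\ell,\prior}(\lambda,n)$. Because this event is defined independently of $\post$, on it the deterministic inequality of the first step holds for every $\post$ at once; dividing by $\lambda>0$ and moving $\EspModel_{f\sim\post}\emplossStat(f)$ to the right-hand side gives exactly~\eqref{eq:alquier}.

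The step I expect to require the most care is not any single estimate but the bookkeeping of quantifiers: the whole point is that the high-probability event $\{\ln Z \le \ln\tfrac1\delta + \Psi_{\ell,\prior}(\lambda,n)\}$ must be selected \emph{before} and independently of the posterior, so that ``for all $\post$'' sits correctly inside $\ProbData(\cdot)$. I would also verify the measure-theoretic hypotheses: nonnegativity of the integrand makes Tonelli applicable without a separate integrability assumption when computing $\EspData Z$, and I would recall that $\KL(\post\|\prior)=+\infty$ whenever $\post$ is not absolutely continuous with respect to $\prior$, so the bound is vacuously true in that case and it suffices to treat the absolutely continuous posteriors.
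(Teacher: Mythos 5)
Your proposal is correct and follows essentially the same route as the paper's own proof: Donsker--Varadhan change of measure applied to $\lambda\big(\genloss(f)-\emplossStat(f)\big)$, a single Markov inequality on the $\post$-independent random variable $Z=\EspModel_{f\sim\prior}e^{\lambda(\genloss(f)-\emplossStat(f))}$, and an exchange of expectations to identify $\ln\EspData Z$ with $\Psi_{\ell,\prior}(\lambda,n)$. Your added care about quantifier order and about invoking Tonelli (rather than Fubini) via nonnegativity of the integrand is a welcome refinement but does not change the argument.
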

For completeness, we provide a proof of Theorem~\ref{thm:general-alquier} in Appendix~\ref{appendix:proof-general-alquier}. This proof highlights that the result is obtained without assuming that the random variables $\bX_i$, $\bY_i$ are mutually independent, 
unlike many ``classical'' PAC-Bayesian theorems. However, the  \iid assumption might be necessary to obtain a computable expression from Theorem~\ref{thm:general-alquier}, because it requires bounding the term $\Psi_{\ell,\prior}(\lambda,n)$ of Eq.~\eqref{eq:alquier_assumption}. Indeed, since  $\Psi_{\ell,\prior}(\lambda,n)$ relies on the unknown joint distribution of $\bX_i,\bY_i$ for $i=1,2,\ldots $ 
its approximation needs assumption on the data.

Interestingly, given a training set $S$, obtaining the optimal posterior $\post^*$ minimizing the bound of Theorem~\ref{thm:general-alquier}, does not require evaluating $\Psi_{\ell,\prior}(\lambda,n)$, as this latter term is independent of both $S$ and $\post$. Indeed, for fixed $S$, $\prior$ and $\lambda$, minimizing the right-hand side of Eq.~\eqref{eq:alquier} amounts to solve
         $\post^* = \argmin_\post \big[  \lambda \EspModel_{f\sim\post} \emploss(f)(S) +   \KL(\post\|\prior) \big],$
which is given by the \emph{Gibbs posterior} \cite{catoni-07,alquier-15,guedj2019primer}; for all $f\in\Fcal$, 
 \begin{align} \label{eq:gibbs_posterior}
 \post^*(f) = \frac1Z \prior(f) \exp\left( -\lambda \emploss(f)(S) \right),
 \end{align}
where $Z$ is a normalization constant. 
We refer to $\lambda$ as a \emph{temperature} parameter, as it controls the emphasis on the empirical loss minimization.
The value of $\lambda$ also directly impacts the value of the generalization bound, and the convergence properties of 
 $\Psi_{\ell,\prior}(\lambda,n)$. In particular, if a non-negative loss is upper bounded by a value $L$ (\ie, $\ell(y,y') {\in} [0,L]$ for all $y,y'{\in} \Ycal$), and $\bX_i,\bY_i$ are \iid, we have, for any $f\in\Fcal$ (we provide the mathematical details in Appendix~\ref{appendix}):
\begin{align} \label{eq:leqL}
    \EspData 
    	\exp\left[{\lambda\left(\genloss(f) - \emplossStat(f)\right)}\right]
\leq
    	 \exp\left[{\frac{\lambda^2 L^2}{8 n} }\right].
\end{align}
Hence, we have  $\Psi_{\ell,\prior}(\lambda,n)
 {\leq} \ln	\EspModel_{f\sim\prior}  e^{\frac{\lambda^2 L^2}{8 n} }
 = \frac{\lambda^2 L^2}{8 n}$, from which the following result is obtained.
\begin{cor}
\label{cor:bounded-loss}
	Given $\Fcal$, $\prior$,  a measurable and bounded loss function $\loss:\Ycal\times\Ycal\to[0,L]$, under \iid observations, for $\delta \in (0,1]$ and $\lambda>0$,  for any
	$\post \text{ over } \Fcal\colon$
	\begin{align*}
        \ProbData\Bigg( &
    \EspModel_{f\sim\post} \genloss(f) 
    	 \le \  \EspModel_{f\sim\post} \emplossStat(f)   \\[-2mm]
    	 &\   + \dfrac{1}{\lambda}\!\left[ \KL(\post\|\prior) +
    	\ln\dfrac{1}{\delta}
    	+ \frac{\lambda^2 L^2}{8 n} \right] \Bigg) \geq 1-\delta \,.
    \end{align*}
\end{cor}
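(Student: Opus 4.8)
The plan is to obtain the corollary as a direct specialization of Theorem~\ref{thm:general-alquier}. Since that theorem holds for an arbitrary measurable loss and makes no independence assumption, the only task is to replace the data-dependent term $\Psi_{\ell,\prior}(\lambda,n)$ of Eq.~\eqref{eq:alquier_assumption} by the explicit upper bound $\tfrac{\lambda^2 L^2}{8n}$. Once this substitution is carried out inside the probability statement of Eq.~\eqref{eq:alquier}, the right-hand side only increases, so the event whose probability is controlled shrinks and the $1-\delta$ guarantee is preserved; this yields exactly the claimed inequality.

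The core step is therefore to control $\Psi_{\ell,\prior}(\lambda,n)=\ln\EspModel_{f\sim\prior}\EspData e^{\lambda(\genloss(f)-\emplossStat(f))}$. First I would fix an arbitrary $f\in\Fcal$ and bound the inner moment generating function $\EspData e^{\lambda(\genloss(f)-\emplossStat(f))}$ using Eq.~\eqref{eq:leqL}, which gives a bound $e^{\lambda^2 L^2/(8n)}$ that does \emph{not} depend on $f$. By monotonicity of the prior expectation $\EspModel_{f\sim\prior}$ and of the logarithm, and since the prior integrates a constant to that same constant, one obtains
\begin{equation*}
\Psi_{\ell,\prior}(\lambda,n)\ \le\ \ln\EspModel_{f\sim\prior} e^{\lambda^2 L^2/(8n)}\ =\ \frac{\lambda^2 L^2}{8n}.
\end{equation*}

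It remains to justify Eq.~\eqref{eq:leqL} itself, which is where the \iid{} hypothesis and the boundedness of the loss are used (Theorem~\ref{thm:general-alquier} needs neither). Writing $Z_i=\ell(f(\bX_i),\bY_i)\in[0,L]$, identical distribution gives $\genloss(f)-\emplossStat(f)=\tfrac1n\sum_{i=1}^n(\EspData Z_i - Z_i)$, a normalized sum of centered, independent terms. Independence lets the moment generating function factorize across $i$, and each factor is the moment generating function, at argument $\lambda/n$, of a centered random variable taking values in an interval of width $L$; Hoeffding's lemma then bounds each such factor by $e^{\lambda^2 L^2/(8 n^2)}$, and the product of the $n$ factors gives $e^{\lambda^2 L^2/(8n)}$.

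I expect the only genuinely delicate point to be bookkeeping rather than a conceptual obstacle: verifying that the \iid{} assumption enters \emph{exclusively} through the factorization in Eq.~\eqref{eq:leqL}, so that the rest of the argument---Theorem~\ref{thm:general-alquier} and the $f$-free prior integration---carries over unchanged. In particular, because the Hoeffding bound is uniform in $f$, no regularity of the prior $\prior$ beyond being a probability measure is required, and the passage from the per-hypothesis bound to the bound on $\Psi_{\ell,\prior}(\lambda,n)$ is immediate.
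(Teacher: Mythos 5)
Your proposal is correct and follows the paper's own route exactly: it specializes Theorem~\ref{thm:general-alquier} by bounding $\Psi_{\ell,\prior}(\lambda,n)$ through the per-hypothesis inequality of Eq.~\eqref{eq:leqL}, which the paper likewise establishes via the \iid{} factorization of the moment generating function and Hoeffding's lemma applied to each centered factor with range of width $L$. The uniform-in-$f$ bound then passes through the prior expectation as a constant, just as in Appendix~\ref{appendix}.
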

\noindent
Therefore, from Corollary~\ref{cor:bounded-loss}, we obtain with $\lambda=\sqrt n$\,,
	\begin{equation}  \label{eq:alquier-sqrtn}
    \EspModel_{\mathclap{f\sim\post}} \genloss(f) 
 \le \,  \EspModel_{\mathclap{f\sim\post}} \emplossStat(f)  
    	  + \dfrac{1}{\sqrt n}\!\left[ \KL(\post\|\prior) {+}
    	\ln\dfrac{1}{\delta} {+} \frac{L^2}{8}
    	 \right].
    \end{equation}  
In turn, with $\lambda=n$\,,
	\begin{equation} \label{eq:alquier-n}
    \EspModel_{\mathclap{f\sim\post}} \genloss(f) 
 \le \  \EspModel_{\mathclap{f\sim\post}} \emplossStat(f)  
    	  + \dfrac{1}{n}\!\left[ \KL(\post\|\prior) {+}
    	\ln\dfrac{1}{\delta}
    	 \right] + \frac{L^2}{8}\,,
    \end{equation}
with probability at least $1$-$\delta$. 
The generalization bound given by Eq.~\eqref{eq:alquier-sqrtn} has the nice property that its value converges to the generalization loss (\ie, the $\frac{1}{\sqrt{n}}\big[\,\cdot\,\big]$ term tends to $0$ as $n$ grows to infinity).
However, the  result of Eq.~\eqref{eq:alquier-n} does not converge: the bound suffers from an additive term $L^2/8$ even with large $n$.

\paragraph{Relation with Bayesian inference.}
Despite its lack of convergence, PAC-Bayesian theorem result of Eq.~\eqref{eq:alquier-n} is interesting for being closely linked to Bayesian inference.
As discussed in \citet{nips-16} (based on earlier results of \citet{zhang-06} and \citet{grunwald-2012}), maximizing the \emph{Bayesian maximum likelihood} amounts to minimize the PAC-Bayes bound of Theorem~\ref{thm:general-alquier} with $\lambda=n$, provided the Bayesian model parameters (typically denoted $\theta$ in the literature) are carefully reinterpreted as predictors (each $\theta$ is mapped to a regressor $f_\theta$), and the considered loss function $\ell$ is the \emph{negative log likelihood} (roughly\footnote{We omit several details here to concentrate on the general idea. We refer the reader to \citet{nips-16} for the whole picture.}, $\loss_{\mathrm{nll}}\big(y,f_\theta(x)\big) = -\ln p(y | x, \theta)$, where $p(y | x, \theta)$ is a Bayesian likelihood).
That is, in these particular conditions, the posterior promoted by the celebrated \emph{Bayesian rule} (\ie, $p(\theta|X,Y) = \frac{p(\theta) p(Y|X,\theta)}{p(Y|X)}$, where $p(\theta)$ is the prior) aligns with the Gibbs posterior of 
Eq.~\eqref{eq:gibbs_posterior}.

Based on this observation, \citet{nips-16} extends Theorem~\ref{thm:general-alquier} to Bayesian linear regression---for which the loss is unbounded--, as discussed in the next section.

\section{Bounds for Bayesian Linear Regression}
\label{sec:bounds1}

In the Bayesian literature \cite[\ldots]{bishop-2006,murphy-12}, it is common to model a linear regression problem by assuming that 
$\Xcal=\mathbb{R}^d$, $\Ycal=\mathbb{R}$.
The input-output pairs 
$\bX_i,\bY_i$ satisfy
 the following assumptions. 
\begin{assumption}~
\label{assumption1}
\begin{description}
	\item[(a)] the inputs $\bX_i$ are such that $\bX_i \sim  \Ncal(\zerobf, \sigx^2\Ib)$, and $\bX_i,\bX_j$ are independent for $i \ne j$. 
	\item[(b)] the labels are given by $\bY_i = \wb^* \!\cdot \bX_i + \bEpsilon_i$, where $\bEpsilon_i{\sim}\Ncal(0,\sigma_\bEpsilon^2)$ and $\bEpsilon_i, \bEpsilon_j$ are
              independent for $i \ne j$.
\end{description}
\end{assumption}
Here, we consider that $\sigma_\bEpsilon>0$ is fixed, and we want to estimate the weight vector parameters $\wb^*\in\Rbb^d$. 
Thus, the likelihood function of $\bY_i$ given $\bX_i$, $\wb^* \in \Rbb^d$ 
is given by
\begin{multline*}
    p(\bY_i|\bX_i, \wb) {=} \Ncal(\bY_i|\wb\cdot\bX_i, \sigma_\bEpsilon^2) \\
{=}  (2\pi\sigma_\bEpsilon^2)^{-\frac12} e^{\big(\frac{1}{2\sigma_\bEpsilon^2} (\bY_i{-}\wb{\cdot}\bX_i)^2\big)}.
\end{multline*}
Therefore, the corresponding negative log-likelihood loss function is proportional to the \emph{squared loss} of a linear regressor $f_\wb(\xb) = \wb\cdot\xb$\,:
\begin{equation} \label{eq:squared_loss}
\losssqr(f_\wb(\bX_i),\bY_i) \, =\, (\bY_i - \wb\cdot\bX_i)^2\,.
\end{equation}

\newcommand{\mzx}{\mu_{z|\xb}}
\newcommand{\szx}{\sigma_{z|\xb}}
\newcommand{\mbar}{\bar\mu}
\newcommand{\sbar}{\bar\sigma}
\newcommand{\Fcalw}{{\mathcal{F}_{\!d}}}

\subsection{Previous theorem}
Considering a family of linear predictors,  $\Fcalw {=} \{f_\wb|\wb{\in}\Rbb^d\}$, \citet{nips-16} proposed a generalization bound for Bayesian linear regression under the following assumptions. 
To get a generalization bound for a squared loss in form of Eq.~\eqref{eq:alquier}, one needs to compute the term $\Psi_{\losssqr,\prior}(\lambda, n)$ or upper bound it. The following is the initial PAC-Bayesian bound for unbounded squared loss proposed by \citet{nips-16}. 
\begin{thm}[\citet{nips-16}] \label{thm:pac-bound-squared-prev}
    Given $ \Fcalw, \losssqr$, and $\delta$ defined above, given a  prior distribution $\pi$ over $\Fcalw$ which is a zero mean Gaussian with  covariance $\sigma_\prior^2\Ib$, i.e., $\pi(f_\wb)=\Ncal(\wb|\zerobf, \sigma_\prior^2\,\Ib)$,   under Assumption \ref{assumption1}, for constants $c {\geq} 2\sigx^2\sigma_\prior^2$, and 
    $\lambda \in (0, \frac{1}{c})$,
    for any posterior distribution $\post$ over $\Fcalw$:
    \begin{align}\label{eq:pac-bound-squared-prev} \nonumber
    & \ProbData \left( ~ ~ \EspModel_{\mathclap{f_\wb\sim\post}} \genloss(f_\wb) 
    	 \le \  \EspModel_{\mathclap{f_\wb\sim\post}} \emplossStat(f_\wb) + \dfrac{1}{\lambda}\!\left[ \KL(\post\|\prior) +
    	\ln\dfrac{1}{\delta}\right]  \right.
    	\\
    &	\left.   + 
    	\frac{\frac12(d + \| \wb^*\|^2) c + (1 - \lambda c)\sigma_\bEpsilon^2}{1-\lambda c} 
    	\right) \geq 1-\delta \,.
    \end{align}
\end{thm}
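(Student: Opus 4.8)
The plan is to specialize Theorem~\ref{thm:general-alquier} to the squared loss $\losssqr$ and the Gaussian prior $\prior$, so that the only quantity left to control is $\Psi_{\losssqr,\prior}(\lambda,n)$ from Eq.~\eqref{eq:alquier_assumption}; the claimed additive term will be an upper bound on $\tfrac{1}{\lambda}\Psi_{\losssqr,\prior}(\lambda,n)$, after which the result follows by plugging into Eq.~\eqref{eq:alquier}. First I would substitute Assumption~\ref{assumption1}(b), writing each residual as $\bY_i-\wb\cdot\bX_i=(\wb^*-\wb)\cdot\bX_i+\bEpsilon_i$, which is Gaussian with mean zero and variance $\sigma_w^2:=\sigx^2\|\wb^*-\wb\|^2+\sigma_\bEpsilon^2$. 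A direct computation then gives $\genloss(f_\wb)=\sigma_w^2$, while $\emplossStat(f_\wb)=\tfrac1n\sum_{i=1}^n(\bY_i-\wb\cdot\bX_i)^2$ is an average of $n$ squared residuals.

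Next I would evaluate the inner data-expectation for fixed $\wb$. Because Assumption~\ref{assumption1} makes the pairs $(\bX_i,\bY_i)$ independent and identically distributed, the exponential factorizes as $\EspData e^{-\frac{\lambda}{n}\sum_i(\bY_i-\wb\cdot\bX_i)^2}=\big(\EspData e^{-\frac{\lambda}{n}(\bY-\wb\cdot\bX)^2}\big)^n$, and each factor is the moment generating function of the square of a centered Gaussian of variance $\sigma_w^2$, evaluated at $-\lambda/n<0$; this yields $\big(1+\tfrac{2\lambda\sigma_w^2}{n}\big)^{-n/2}$ with no constraint on $\lambda$. Multiplying by $e^{\lambda\genloss(f_\wb)}=e^{\lambda\sigma_w^2}$ and then bounding $\big(1+\tfrac{2\lambda\sigma_w^2}{n}\big)^{-n/2}\le1$ removes all dependence on $n$; this deliberately lossy step is exactly what leaves the residual constant $L^2/8$-type obstruction and is what the later sections improve upon.

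The remaining work is the outer expectation over $\wb\sim\prior=\Ncal(\zerobf,\sigma_\prior^2\Ib)$ of $e^{\lambda\sigma_w^2}=e^{\lambda\sigma_\bEpsilon^2}\,e^{\lambda\sigx^2\|\wb^*-\wb\|^2}$. Since $\wb^*-\wb\sim\Ncal(\wb^*,\sigma_\prior^2\Ib)$, the quantity $\|\wb^*-\wb\|^2$ is a noncentral chi-square, and I would use the Gaussian identity $\EspModel_{z\sim\Ncal(\mu,s^2)}e^{tz^2}=(1-2ts^2)^{-1/2}\exp\!\big(\tfrac{t\mu^2}{1-2ts^2}\big)$ componentwise, with $t=\lambda\sigx^2$, $s^2=\sigma_\prior^2$, and $\mu$ ranging over the coordinates of $\wb^*$. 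This produces $(1-2\lambda\sigx^2\sigma_\prior^2)^{-d/2}\exp\!\big(\tfrac{\lambda\sigx^2\|\wb^*\|^2}{1-2\lambda\sigx^2\sigma_\prior^2}\big)$, which is finite precisely when $2\lambda\sigx^2\sigma_\prior^2<1$.

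Finally, taking logarithms, dividing by $\lambda$, and using $c\ge2\sigx^2\sigma_\prior^2$ together with the elementary inequality $-\ln(1-x)\le\frac{x}{1-x}$ on $[0,1)$ would collect the $d$ and $\|\wb^*\|^2$ contributions over the common denominator $1-\lambda c$ and reproduce the stated term. The main obstacle is this last Gaussian integration over the prior: it is the only place where $\lambda$ must be restricted, since the noncentral chi-square moment generating function diverges once $2\lambda\sigx^2\sigma_\prior^2\ge1$. Introducing the constant $c\ge2\sigx^2\sigma_\prior^2$ is precisely the device that pins the admissible range to $\lambda\in(0,\tfrac1c)$, keeps the denominator $1-\lambda c$ positive, and lets one absorb the remaining constants into the claimed bound.
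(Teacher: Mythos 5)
The paper never actually proves Theorem~\ref{thm:pac-bound-squared-prev}: it is quoted from \citet{nips-16} with a pointer to their Appendix~A.4, and the only in-paper trace of the derivation is the remark in the ``Theorems comparison'' subsection that the fraction of Eq.~\eqref{eq:complexity_term_1} is there replaced by its numerator $\exp(\lambda v_{\wb})$. Your proposal reconstructs exactly that route: you compute the inner data expectation as the chi-square moment generating function $(1+2\lambda v_{\wb}/n)^{-n/2}$ (the same computation as in the paper's own proof of Theorem~\ref{thm:pac-bound-squared}), deliberately discard it by bounding it by $1$, and then integrate $e^{\lambda v_{\wb}}$ over the Gaussian prior via the noncentral chi-square MGF, which is indeed the sole source of the restriction $2\lambda\sigx^2\sigma_\prior^2<1$, hence $\lambda\in(0,\tfrac1c)$. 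This is the intended argument, and you correctly pinpoint the lossy step as the one Theorem~\ref{thm:pac-bound-squared} removes.

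One bookkeeping caveat on your final step. After taking logarithms and dividing by $\lambda$, the prior integration leaves the two terms $-\frac{d}{2\lambda}\ln\left(1-2\lambda\sigx^2\sigma_\prior^2\right)$ and $\frac{\sigx^2\|\wb^*\|^2}{1-2\lambda\sigx^2\sigma_\prior^2}$. The first is bounded by $\frac{dc/2}{1-\lambda c}$ exactly as you say, using $-\ln(1-x)\le \frac{x}{1-x}$ together with $2\sigx^2\sigma_\prior^2\le c$. For the second, however, matching the stated coefficient $\frac{c}{2}\|\wb^*\|^2$ requires $\sigx^2\le c/2$, whereas the hypothesis only yields $\sigx^2\sigma_\prior^2\le c/2$; the two coincide only when $\sigma_\prior^2\ge 1$ (or if $\|\wb^*\|^2$ is read as the prior-normalized quantity $\|\wb^*\|^2/\sigma_\prior^2$). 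This is a constants issue inherited from the original statement rather than a flaw in your strategy, but as written the inequality you invoke for the $\|\wb^*\|^2$ contribution does not follow from $c\ge 2\sigx^2\sigma_\prior^2$ alone, and you should either add the assumption $\sigma_\prior\ge1$ or track the $\sigma_\prior^2$ factor explicitly.
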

Theorem~\ref{thm:pac-bound-squared-prev} expresses the result with $\lambda$ stated explicitly, while \citet{nips-16}---see Appendix A.4 therein---were focusing on the case $\lambda=n$. Here, we observe that the bound does not converge; 
regardless the choice of $\lambda$, the last term of Eq.~\eqref{eq:pac-bound-squared-prev} is not negligible.

Note that PAC-Bayesian guarantees for similar Bayesian models has also been proposed by other authors, under different set of assumptions, either bounded loss \cite{ShethK17} or non-random inputs \cite{DalalyanT08}.

\subsection{Improved theorem}
The first contribution of this paper is an improvement of Theorem~\ref{thm:pac-bound-squared-prev}.
\begin{thm} \label{thm:pac-bound-squared}
    Given $ \Fcalw, \losssqr$ defined above,  under Assumption~\ref{assumption1}, 
    for any $\delta \in (0,1]$, $\lambda > 0$,
   for any prior distribution  $\prior$ over $\Fcalw$,  and  for any posterior distribution  $\post$ over $\Fcalw$, the following holds:
   \begin{equation}
         \label{eq:pac-bound-squared}
    \begin{split}
   &  \ProbData\left(~ ~ \EspModel_{\mathclap{f_\wb\sim\post}} \genloss(f_\wb) 
    	\le \  \EspModel_{f_\wb\sim\post} \emplossStat(f_\wb) \right. \\ 
      & \left.
    	  + \dfrac{1}{\lambda}\!\left[ \KL(\post\|\prior) +
    	\ln\dfrac{1}{\delta}
    	+ \Psi_{\losssqr,\prior}(\lambda,n)  \right]\right) \geq 1-\delta\,,
    \end{split}
   \end{equation}
    \begin{align}\label{eq:complexity_term_1}
    \mbox{where}\qquad    \Psi_{\losssqr,\prior}(\lambda, n) 
        &= \ \ln	\EspModel_{f_\wb\sim\prior} \frac{\exp{\left(\lambda v_{\wb} \right) }}{ \left (1 + \frac{\lambda v_{\wb}}{\frac{n}{2}} \right)^{\frac{n}{2}}} \\
        & \leq \ln \EspModel_{f_\wb\sim\prior} \exp \left ( \frac{{\lambda^2 v_{\wb}^2}}{{\frac{n}{2}}} \right) , \label{eq:complexity_term_2}
    \end{align}~~~~~and \qquad $v_{\wb} = \sigx^2 \parallel \wb^* - \wb \parallel_2^2 + \sigma_{\bEpsilon}^2$\,.
\end{thm}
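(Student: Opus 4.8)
The plan is to invoke Theorem~\ref{thm:general-alquier} with $\ell=\losssqr$ and $\Fcal=\Fcalw$, which immediately delivers the probabilistic statement of Eq.~\eqref{eq:pac-bound-squared}; all that then remains is to evaluate (and subsequently bound) the complexity term $\Psi_{\losssqr,\prior}(\lambda,n)=\ln\EspModel_{f_\wb\sim\prior}\EspData\, e^{\lambda(\genloss(f_\wb)-\emplossStat(f_\wb))}$. Since $\genloss(f_\wb)$ is deterministic once $\wb$ is fixed, I would first factor the inner data-expectation as $e^{\lambda\genloss(f_\wb)}\,\EspData\, e^{-\lambda\emplossStat(f_\wb)}$, reducing the problem to computing the moment generating function of the empirical squared loss evaluated at a negative argument.

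The key observation is that, under Assumption~\ref{assumption1}, the residual $\bY_i-\wb\cdot\bX_i=(\wb^*-\wb)\cdot\bX_i+\bEpsilon_i$ is a sum of independent centred Gaussians, hence $\bY_i-\wb\cdot\bX_i\sim\Ncal(0,v_{\wb})$ with $v_{\wb}=\sigx^2\|\wb^*-\wb\|_2^2+\sigma_{\bEpsilon}^2$. Computing this variance simultaneously shows $\genloss(f_\wb)=\EspData(\bY_i-\wb\cdot\bX_i)^2=v_{\wb}$, which is precisely why $v_{\wb}$ turns out to be the generalization loss itself. Consequently each per-sample loss $(\bY_i-\wb\cdot\bX_i)^2$ equals $v_{\wb}$ times a $\chi^2_1$ variable, and by independence of the pairs $(\bX_i,\bEpsilon_i)$ across $i$ the scaled empirical loss $\tfrac{n}{v_{\wb}}\emplossStat(f_\wb)$ is distributed as $\chi^2_n$.

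I would then substitute the closed-form moment generating function $\EspData\,e^{t\chi^2_n}=(1-2t)^{-n/2}$, evaluated at $t=-\lambda v_{\wb}/n$. Because $\lambda>0$ and $v_{\wb}>0$, this argument is always negative, so the expectation is finite for \emph{every} $\lambda$; this is exactly the point where the new result escapes the restriction $\lambda\in(0,\tfrac1c)$ of Theorem~\ref{thm:pac-bound-squared-prev}. Combining the prefactor $e^{\lambda v_{\wb}}$ with $(1+\tfrac{\lambda v_{\wb}}{n/2})^{-n/2}$, then taking the outer prior-expectation and logarithm, yields the exact identity Eq.~\eqref{eq:complexity_term_1}. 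To reach the upper bound Eq.~\eqref{eq:complexity_term_2}, I would bound the integrand pointwise in $\wb$ using the elementary inequality $\ln(1+x)\ge x-\tfrac{x^2}{2}$ (valid for $x\ge0$) with $x=\lambda v_{\wb}/(n/2)$, giving $\lambda v_{\wb}-\tfrac{n}{2}\ln(1+\tfrac{\lambda v_{\wb}}{n/2})\le\tfrac{\lambda^2 v_{\wb}^2}{n}\le\tfrac{\lambda^2 v_{\wb}^2}{n/2}$; monotonicity of $\exp$, $\EspModel_{f_\wb\sim\prior}$ and $\ln$ then transfers the inequality through.

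The genuinely delicate step is the distributional identification feeding the MGF computation: one must verify that the cross term between $(\wb^*-\wb)\cdot\bX_i$ and $\bEpsilon_i$ vanishes in expectation, that the variance collapses to the single scalar $v_{\wb}$, and that independence across $i$ legitimately converts the product of per-sample MGFs into the single $\chi^2_n$ MGF. Once this structure is in place, every downstream manipulation is routine algebra.
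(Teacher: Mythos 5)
Your proposal is correct and follows essentially the same route as the paper: apply Theorem~\ref{thm:general-alquier}, factor out $e^{\lambda\genloss(f_\wb)}$, identify $\bY_i-\wb\cdot\bX_i\sim\Ncal(0,v_{\wb})$ so that the scaled empirical loss is $\chi^2_n$, and substitute its MGF to obtain Eq.~\eqref{eq:complexity_term_1}. The only (immaterial) difference is the final elementary inequality: the paper uses $(1+\tfrac{a}{b})^b>e^{ab/(a+b)}$ while you use $\ln(1+x)\ge x-\tfrac{x^2}{2}$, which in fact yields the marginally tighter intermediate bound $\lambda^2 v_{\wb}^2/n$ before relaxing to Eq.~\eqref{eq:complexity_term_2}.
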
 
\begin{proof}
We get the complexity term in form of Eq.~\eqref{eq:complexity_term_1} by simplifying the general form given in Eq.~\eqref{eq:alquier_assumption}, and using assumptions on inputs and a prior distribution.
{
\begin{align*} 
    &\Psi_{\losssqr,\prior}(\lambda, n) \\ 
    &=
    \ln	\!\!\EspModel_{f_\wb\sim\prior} \EspData
    \exp\left[{\lambda \left(\genloss(f_\wb) - \emplossStat(f_\wb)\right)}\right]\\
    &=
    \ln	\!\!\EspModel_{f_\wb\sim\prior} 
    \exp\left({\lambda \genloss(f_\wb)} \right) \EspData \exp{\left(-\lambda \emplossStat(f_\wb)\right)} \\
    &=
    \ln	\!\!\EspModel_{f_\wb\sim\prior} \exp{\Big(\lambda \genloss(f_\wb)\Big)} 
    \underbrace{\EspData \exp{\!\Big({-}  \tfrac{\lambda}{n} \sum_{i=1}^{n} (\bY_i {-} \wb {\cdot} \bX_i)^2  \Big)}}_{(\clubsuit)}\!.
\end{align*}
}%
Note that random variable $\bY_i - \wb \cdot \bX_i = (\wb^* - \wb)\bX_i + \bEpsilon_i$ has zero expectation
$$ \EspData (\bY_i - \wb \cdot \bX_i) = (\wb^* - \wb)\EspData \bX_i + \EspData \bEpsilon_i  = 0\,,$$
and its second moment, denoted $v_{\wb}$, which by definition equals $\genloss(f_{\wb})$, is 
\begin{align*}
     \genloss(f_{\wb})&=\EspData (\bY_i - \wb \cdot \bX_i)^2\\
    &=
    \EspData \left[(\wb^* - \wb)\bX^T_i\bX_i(\wb^* - \wb)\right]\\
   &\qquad + \EspData \left[ 2(\wb^* - \wb)\bX_i\bEpsilon_i +  \bEpsilon_i^2 \right ] \\
   & =\ \sigx^2 \parallel \wb^* - \wb \parallel_2^2 + \sigma_{\bEpsilon}^2 \,.
\end{align*}
Hence, $\frac{\bY_i - \wb \cdot \bX_i}{\sqrt{v_{\wb}}}\sim\Ncal(0,1)$ is a normalized random variable, and its squared sum follows Chi-squared distribution law. Note that the term $(\clubsuit)$ of the function $\Psi_{\losssqr,\prior}(\lambda, n)$ in the form 
\begin{equation*}\textstyle
  \EspData 
 \exp\left(-  \frac{\lambda v_{\wb}}{n} \sum_{i=1}^{n} \left( \frac{\bY_i - \wb \cdot \bX_i}{\sqrt{v_{\wb}}}\right)^2  \right),   
\end{equation*}
corresponds to the moment generating function (MGF) of a Chi-squared distribution, \ie{} $(1-2t)^{\frac{n}{2}}$ with $t = - \frac{\lambda v_{\wb}}{n}$.

By replacing the term $(\clubsuit)$ by Chi-Squared MGF and $\genloss(f_{\wb})$ by $v_{\wb}$, we get the complexity term in form of Eq.~\eqref{eq:complexity_term_1}.

Eq.~\eqref{eq:complexity_term_2} is obtained by lower bounding the denominator of  Eq.~\eqref{eq:complexity_term_1}
by using the inequality 
$ (1 + \frac{a}{b})^b > \exp(\frac{ab}{a+b} )$, for $a,b > 0$\,:
\begin{align*}
& \Psi_{\loss, \prior}(\lambda, n) 
= \ \ln	\EspModel_{f_\wb\sim\prior} \frac{\exp{\left(\lambda v_{\wb} \right) }}{ \exp \left( \frac{\lambda v_{\wb} \frac{n}{2}}{\lambda v_{\wb} + \frac{n}{2}} \right) } \\
&= \, \ln \!\!\EspModel_{f_\wb\sim\prior} \!\!\exp \left ( \tfrac{{\lambda^2 v_{\wb}^2}}{{\lambda v_{\wb} + \frac{n}{2}}} \right) 
\, \leq\, \ln \!\!\EspModel_{f_\wb\sim\prior}\!\! \exp \left ( \tfrac{{\lambda^2 v_{\wb}^2}}{{\frac{n}{2}}} \right) . \hspace{2mm} \qedhere
\end{align*} 
\end{proof}

We are interested in the convergence properties of the right side of Eq.~\eqref{eq:pac-bound-squared}. This will highly depend on the choice of $\lambda$.

\begin{itemize}
    \item If $\lambda$ is fixed and does not depend on $n$, and the latter approaches to $\infty$, we get
    \begin{equation*}
        \EspModel_{f_\wb\sim\post} \genloss(f_\wb) 
    	 \le \  \EspModel_{f_\wb\sim\post} \emplossStat(f_\wb)  + \dfrac{1}{\lambda}\!\left[ \KL(\post\|\prior) +
    	\ln\dfrac{1}{\delta} \right].
    \end{equation*}
    The term $\Psi_{\loss,\prior}(\lambda, n)$ amounts to $0$, since the expression under the expectation of Eq.~\eqref{eq:complexity_term_1} will converge to $1$ due to the fact that \[ \exp{(\lambda v_{\wb})} = \lim_{n \rightarrow \infty} \left(1 + \tfrac{\lambda v_{\wb}}{\frac{n}{2}} \right)^{\frac{n}{2}}.\] Hence, an empirical error converges to the generalization error with sufficiently large value of the parameter $\lambda$, and small divergence between prior and posterior distributions.
    \item If $\lambda$ is considered as a function of $n$, then we can obtain convergence of the right side of the Eq.~\eqref{eq:pac-bound-squared} to the left side with a well-chosen temperature parameter. Let $\lambda$ be $n^{\frac{1}{d}}\ln(\frac{1}{\delta})$, then from Eq.~\eqref{eq:pac-bound-squared} and~\eqref{eq:complexity_term_2}, we have
    \begin{align*}
        &\EspModel_{f_\wb\sim\post} \genloss(f_\wb) 
    	 \le \  \EspModel_{f_\wb\sim\post} \emplossStat(f_\wb)  + \frac{\KL(\post\|\prior)}{n^{\frac{1}{d}}\ln(\frac{1}{\delta})} \\
    	 &+ \ 
    	\dfrac{1}{n^{\frac{1}{d}}} + \dfrac{1}{n^{\frac{1}{d}}}\ln \EspModel_{f_\wb\sim\prior} \exp \left ( \frac{{2n^{\frac{2}{d}}\ln(\frac{1}{\delta})^2 v_{\wb}^2}}{n} \right) .
    \end{align*}
    If the amount of training examples $n {\to} \infty $, then the bound converges to generalization loss.
\end{itemize}

\subsection{Theorems comparison}
The new bound given by Theorem~\ref{thm:pac-bound-squared} is always tighter than the previous one of Theorem~\ref{thm:pac-bound-squared-prev}.
Indeed, the fraction of Eq.~\eqref{eq:complexity_term_1} is upper bounded by its numerator $\exp{\left(\lambda v_{\wb} \right) }$. The latter is the exact same expression as in the derivation of \citet{nips-16} (Supp. Material A4, p.11, line 4), which lead us to the prior bound shown in Eq.~\eqref{eq:pac-bound-squared-prev}. Moreover, the new bound converges to zero for well-chosen temperature parameter $\lambda$ as the number of training observations goes to infinity. For these reasons, the result of Theorem~\ref{thm:pac-bound-squared} is strictly stronger than those of Theorem~\ref{thm:pac-bound-squared-prev}.

\section{Extension to the non \iid case}
\label{extension}
 
 In this section we will study the case when the observed data are no longer sampled independently from the underlying distribution.

 \subsection{The learning problem and its relationship with time series}
  We consider the same learning problem as in Section \ref{sec:bounds1},
  but 
  we modify Assumption \ref{assumption1} by no longer assuming that $\bX_i$ are
  \iid random variables, more precisely, we assume the following:
  \begin{assumption}
  \label{assumption2}
   We assume Part \textrm{(b)} of Assumption \ref{assumption1} and we assume
   that $\bX_i \sim  \Ncal(\zerobf, Q_x)$ for some positive definite matrix $Q_x > 0$. 
  \end{assumption}
 It then follows that $\bY_i$ are also identically distributed, 
 $\bY_i \sim \Ncal(0,\sigma_y^2)$, where
 \begin{equation*}
  \sigma_y^2=\wb^{*T}Q_x \wb^*+\sigma_{\bEpsilon}^2I\,.
 \end{equation*}

Note that from the assumption that $\bX_i$ are identically distributed
  it follows that $\genloss(f_{\wb})$ does not depend on $i$ and 
  \begin{equation*}
   \genloss(f_{\wb})=(\wb^*-\wb)^TQ_x(\wb^* - \wb)+\sigma_{\bEpsilon}^2\,. 
  \end{equation*}

   A particular instance of the learning problem above is the problem of learning ARX models, which
   is a well-studied problem in control theory  and econometrics \cite{LjungBook,HannanBook}. For the sake of
   simplicity, we will deal only with the scalar input, scalar output case. 
   Consider  stationary zero mean discrete-time stochastic processes $\yb_t,\ub_t$,  $t \in \mathbb{Z}$, $t > 0$.

   Assume that
   there exist real numbers $\{a_i,b_i\}_{i=1}^{k}$  and a stochastic process $\eb_t$ such that 
   \begin{equation}
   \label{arx:model}
    \yb_t=\sum_{i=1}^{k} a_i \yb_{t-i} + \sum_{i=1}^{k} b_i \ub_{t-i} + \eb_t\,,
   \end{equation}
   where $\eb_t$ is assumed to be an \iid sequence of random variables such that $\eb_t \in \Ncal(0,\sigma^2)$ and $\eb_t$ is uncorrelated with $\yb_s, \ub_s$ for $s <  t$. 
   Consider the polynomial $\mathbf{a}(z)=z^k-\sum_{i=1}^{k} a_i z^{k-i-1}$. If $\mathbf{a}(z)$ has all its complex roots inside the unit disc, 
   and $\ub_t$ is a stationary, then it is well known \cite{HannanBook} that  there $\yb_t$ is the unique stationary process which satisfies Eq.~\eqref{arx:model}. 
  
   Moreover, if $\ub_t$ is a jointly Gaussian process, then the $\yb_t$ and the parameters $(\{a_i,b_i\}_{i=1}^{k},\sigma^2)$ together with the joint distribution of $\ub_t$ determine the distribution of $\yb_t$ uniquely \cite{HannanBook}.
    
    Intuitively, the learning problem is to try to compute a prediction $\hat{\yb}_t$ of $\yb_t$ based on past values $\{\yb_{t-l},\ub_{t-l}\}_{l=1}^{\infty}$ of the input and output processes. 
  In the literature \cite{HannanBook,LjungBook} one typically would like to minimize the prediction error $\EspData[(\yb_t-\hat{\yb}_t)^2]$
    In principle, this generalization error may depend on $t$. However, if we assume that the predictor $f$ uses only the last $L$ observations and it is of the form
\(
     \hat{\yb}_t=\sum_{i=1}^{L} \hat{a}_i \yb_{t-i} + \sum_{i=1}^{L} \hat{b}_i \ub_{t-i} \),
    then by stationarity of $\yb_t,\ub_t$, $t \in \mathbb{Z}$, the predictor will not depend on $t$. Furthermore, 
    if $\yb_t,\ub_t$ come from an ARX model Eq.~\eqref{arx:model} and they are Gaussian, then it can be shown \cite{HannanBook} under some mild assumptions
    that the best possible predictor is necessarily of
    the above form  with $L=k$, and in fact, we should take $\hat{a}_i=a_i$, $\hat{b}_i=b_i$, $i=1,\ldots,k$, and in this case the
    generalization error $\EspData[(\yb_t{-}\hat{\yb}_t)^2]=\sigma^2$. 
    For this reason, in the literature \cite{LjungBook,HannanBook} the learning problem is often formulated as the problem of estimating the parameters of the true model
    (Eq.~\eqref{arx:model}). It is well known that for ARX models, the latter point of view is essentially equivalent to finding the predictor for which the
    generalization error $\EspData[(\yb_t-\hat{\yb}_t)^2]$ is the smallest. 

This allows us to   recast the learning problem into our framework for linear regression as follows.
    For every $i=1,2,\ldots$, define
    \begin{equation*}
      \begin{split}
      &  \bY_i=\yb_{i+k}\,,  \\
      &  \bX_i=\begin{bmatrix} \yb_{i+k-1} & \ldots & \yb_{i-1} & \ub_{i+k-1} & \ldots & \ub_{i-1} \end{bmatrix}^T, \\ 
       &  \wb^*=\begin{bmatrix} a_1 & \ldots & a_k & b_1 & \ldots & b_k \end{bmatrix},  \bEpsilon_t=\eb_{i+k}\,.
       \end{split}
     \end{equation*}
    It then follows that $\bX_i,\bY_i,\bEpsilon_i$ satisfy 
    Assumption \ref{assumption2}.

   \subsection{PAC-Bayesian approach for linear regression with possibly dependent observations}
    In this section we discuss the extension of Theorem \ref{thm:pac-bound-squared} to the case when the observations are not independently sampled. 
 
   Although Theorem \ref{thm:general-alquier} holds even when $(\bX_i,\bY_i)$ are not \iid, the proof of 
   Theorem \ref{thm:pac-bound-squared} relies heavily on the independence of $\bX_i$, $i=1,\ldots,n$. 
   More precisely, let us recall from the proof of Theorem
   \ref{thm:pac-bound-squared} the empirical prediction error variables
   \begin{equation}
   \label{pred:error_var}
      \bZ_{\wb,i} = \bY_i - \wb \cdot \bX_i = (\wb^* - \wb) \cdot \bX_i + \bEpsilon_i \,.
   \end{equation}

   The proof of Theorem \ref{thm:pac-bound-squared} relied on $\bZ_{\wb,i}$, $i=1,\ldots,n$ being independent and identically distributed 
   zero mean Gaussian random variables. In our case, the variables $\bZ_{\wb,i}$ are still zero mean  Gaussian variables which are identically
   distributed, but they no longer independent. Hence, we have to take into account the joint distribution of $\{\bZ_{\wb,i}\}_{i=1}^{n}$,
   which in turn depends on the joint distribution of $\{\bX_i\}_{i=1}^{n}$. 
   
  In order to deal with this phenomenon, we will define the \emph{joint covariance matrix} $Q_{X,n}$ of the random variable
  $\bX_{1:n}=\begin{bmatrix} \bX_1^T, & \ldots, & \bX_n^T \end{bmatrix}$ as follows: 
   \begin{equation*}
    Q_{X,n}=\EspData[\bX_{1:n} \bX_{1:n}^T]\,,
   \end{equation*}
  \ie, the $(i,j)$th $d {\times} d$ block matrix element of $Q_{X,n}$ is $\EspData[\bX_i\bX_j^T]$. 
   We can then formulate the following bound.
   \begin{thm}[] \label{thm:pac-bound-squared2}
   Let $\rho_n$ be the minimal eigenvalue of $Q_{X,n} $ and assume that $\rho_n {>} 0$. Under Assumption \ref{assumption2}, for any
  prior distribution $\prior$ over $\Fcalw$, any $\delta \in (0,1]$, any real number $\lambda>0$, 
and for any posterior distribution $\post \text{ over } \Fcalw$, 
we have
    \begin{equation} \label{eq:pac-bound-squared2}
	\begin{split} 
     & \ProbData\left( \EspModel_{f_\wb\sim\post} \genloss(f_\wb) 
    	\le \  \EspModel_{f_\wb\sim\post} \emplossStat(f_\wb)  \right. \\
& \left.    	  + \dfrac{1}{\lambda}\!\left[ \KL(\post\|\prior) +
    	\ln\dfrac{1}{\delta}
    	+ \hat{\Psi}_{\loss,\prior}(\lambda,n)  \right]
  \right) \geq 1-\delta\,,
    \end{split}
    \end{equation}
    where
    \begin{align}\label{eq:complexity_term_12}
        &\hat{\Psi}_{\loss,\prior}(\lambda, n) 
        = \ \ln	\EspModel_{f_\wb\sim\prior} \frac{\exp{\left(\lambda v_{\wb} \right) }}{ \left (1 + \frac{\lambda \rho_{n,\wb}}{\frac{n}{2}} \right)^{\frac{n}{2}}} \\
        & \leq \ln \EspModel_{f_\wb\sim\prior} \exp \left ( \frac{\lambda^2 v_{\wb}\rho_{n,\wb}}{{\frac{n}{2}}}+\lambda(v_{\wb}-\rho_{n,\wb}) \right), \label{eq:complexity_term_22}
    \end{align}
    with
    \( v_{\wb}=(\wb^*-\wb)^TQ_x(\wb^* - \wb)+\sigma_{\bEpsilon}^2 \)\,,\\ and
    \( \rho_{n,\wb}=\rho_n (\wb^*-\wb)^T(\wb^* - \wb)+\sigma_{\bEpsilon}^2 \)\,. 
\end{thm}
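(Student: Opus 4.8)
The plan is to follow the proof of Theorem~\ref{thm:pac-bound-squared} line by line up to the inner expectation, and then to replace the scalar chi-squared computation by the moment generating function of a \emph{correlated} Gaussian quadratic form. Since Theorem~\ref{thm:general-alquier} assumes no independence, I would start from it and factor its exact complexity term exactly as in the i.i.d.\ case,
\begin{equation*}
\ln \EspModel_{f_\wb\sim\prior} \exp\!\big(\lambda\,\genloss(f_\wb)\big)\,
\underbrace{\EspData \exp\!\Big({-}\tfrac{\lambda}{n}\textstyle\sum_{i=1}^n \bZ_{\wb,i}^2\Big)}_{(\clubsuit)},
\end{equation*}
using $\genloss(f_\wb)=v_{\wb}=(\wb^*-\wb)^TQ_x(\wb^*-\wb)+\sigma_{\bEpsilon}^2$ from Assumption~\ref{assumption2} to handle the numerator.

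The decisive step is the evaluation of $(\clubsuit)$. Collecting $\bZ_{\wb,1:n}=(\bZ_{\wb,1},\ldots,\bZ_{\wb,n})^T$, this is a zero-mean jointly Gaussian vector, so $(\clubsuit)$ is the MGF of the quadratic form $\bZ_{\wb,1:n}^T\bZ_{\wb,1:n}$; by the standard Gaussian identity $\EspData\exp(-s\,\bZ^T\bZ)=\det(\Ib+2s\,\Sigma)^{-1/2}$ I would write $(\clubsuit)=\det\!\big(\Ib+\tfrac{2\lambda}{n}\Sigma_{\wb,n}\big)^{-1/2}$, where $\Sigma_{\wb,n}=\EspData[\bZ_{\wb,1:n}\bZ_{\wb,1:n}^T]$. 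Expanding $\bZ_{\wb,i}=(\wb^*-\wb)\cdot\bX_i+\bEpsilon_i$ and using that the $\bEpsilon_i$ are i.i.d.\ and uncorrelated with the inputs, the $(i,j)$ entry of $\Sigma_{\wb,n}$ is $(\wb^*-\wb)^T\EspData[\bX_i\bX_j^T](\wb^*-\wb)+\sigma_{\bEpsilon}^2\delta_{ij}$, i.e.\ $\Sigma_{\wb,n}=A^T Q_{X,n}A+\sigma_{\bEpsilon}^2\Ib$ with $A=\Ib_n\otimes(\wb^*-\wb)$.

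The main obstacle, and the only genuinely new ingredient, is a uniform lower bound on the spectrum of $\Sigma_{\wb,n}$. Since $A^TA=(\wb^*-\wb)^T(\wb^*-\wb)\,\Ib_n$, for every unit vector $u\in\Rbb^n$ I get $u^TA^TQ_{X,n}Au=(Au)^TQ_{X,n}(Au)\ge\rho_n\|Au\|^2=\rho_n(\wb^*-\wb)^T(\wb^*-\wb)$, because $\rho_n$ is the minimal eigenvalue of $Q_{X,n}$. Hence $\Sigma_{\wb,n}\succeq\rho_{n,\wb}\,\Ib$, so every eigenvalue of $\Sigma_{\wb,n}$ is at least $\rho_{n,\wb}$; as $\lambda>0$, the determinant factorises over eigenvalues and $\det\!\big(\Ib+\tfrac{2\lambda}{n}\Sigma_{\wb,n}\big)\ge\big(1+\tfrac{\lambda\rho_{n,\wb}}{n/2}\big)^{n}$, whence $(\clubsuit)\le\big(1+\tfrac{\lambda\rho_{n,\wb}}{n/2}\big)^{-n/2}$. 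Substituting into the factored term gives the bound Eq.~\eqref{eq:complexity_term_12} on the exact complexity term; since enlarging the complexity term only relaxes the bound of Theorem~\ref{thm:general-alquier} (here $\lambda>0$), replacing it by $\hat{\Psi}_{\loss,\prior}(\lambda,n)$ preserves the guarantee Eq.~\eqref{eq:pac-bound-squared2}. (As a sanity check, when $Q_{X,n}=\sigx^2\Ib$ one has $\rho_{n,\wb}=v_{\wb}$ and this recovers Theorem~\ref{thm:pac-bound-squared} exactly.)

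Finally, to reach Eq.~\eqref{eq:complexity_term_22} I would reuse the inequality $(1+\tfrac{a}{b})^{b}>\exp(\tfrac{ab}{a+b})$ with $a=\lambda\rho_{n,\wb}$ and $b=\tfrac n2$, exactly as in the i.i.d.\ proof; this turns the exponent into $\lambda(v_{\wb}-\rho_{n,\wb})+\tfrac{\lambda^2\rho_{n,\wb}^2}{\lambda\rho_{n,\wb}+n/2}$. Bounding the fraction first by $\tfrac{\lambda^2\rho_{n,\wb}^2}{n/2}$ and then by $\tfrac{\lambda^2 v_{\wb}\rho_{n,\wb}}{n/2}$ yields the stated form, where the last step needs $\rho_{n,\wb}\le v_{\wb}$. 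I expect this comparison to follow from Cauchy eigenvalue interlacing: $Q_x=\EspData[\bX_i\bX_i^T]$ is a diagonal block of $Q_{X,n}$, so $\rho_n\le\lambda_{\min}(Q_x)$ and therefore $\rho_n(\wb^*-\wb)^T(\wb^*-\wb)\le(\wb^*-\wb)^TQ_x(\wb^*-\wb)$, which gives $\rho_{n,\wb}\le v_{\wb}$ after adding $\sigma_{\bEpsilon}^2$.
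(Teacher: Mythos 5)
Your proof is correct and follows the same overall architecture as the paper's: start from Theorem~\ref{thm:general-alquier}, factor the complexity term into $\exp(\lambda\genloss(f_\wb))$ times the inner expectation $(\clubsuit)$, and control $(\clubsuit)$ through the spectral lower bound $\Sigma_{\wb,n}\succeq \rho_{n,\wb}\Ib_n$ (your $\Sigma_{\wb,n}$ is the paper's $Q_{\wb,n}=D_{\wb}^TQ_{X,n}D_{\wb}+\sigma_\bEpsilon^2\Ib_n$). The one genuine difference is how $(\clubsuit)$ is evaluated: the paper whitens, setting $\bS=Q_{\wb,n}^{-1/2}\bZ_{\wb,1:n}\sim\Ncal(0,\Ib_n)$, bounds the quadratic form pointwise by $\sum_i\bZ_{\wb,i}^2\ge\rho_{n,\wb}\sum_i\bS_i^2$, and then invokes the scalar $\chi^2$ MGF, whereas you use the exact identity $(\clubsuit)=\det\big(\Ib_n+\tfrac{2\lambda}{n}\Sigma_{\wb,n}\big)^{-1/2}$ and bound the determinant eigenvalue by eigenvalue. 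The two routes are equivalent here (both reduce to the same spectral inequality and the same final expression), but yours has the small advantage of exhibiting the exact value of $(\clubsuit)$ before any relaxation, which makes it transparent that the only loss in Eq.~\eqref{eq:complexity_term_12} comes from replacing each eigenvalue of $\Sigma_{\wb,n}$ by $\rho_{n,\wb}$. You also correctly flag that the passage to Eq.~\eqref{eq:complexity_term_22} needs $\rho_{n,\wb}\le v_{\wb}$ and supply the interlacing argument ($Q_x$ is a diagonal block of $Q_{X,n}$, so $\rho_n\le\lambda_{\min}(Q_x)$); the paper establishes the same fact in the discussion preceding its proof via the monotonicity of $\rho_n$ and $Q_{X,1}=Q_x$ (and, incidentally, your version sidesteps the paper's typo calling $\rho_n$ ``monotonically increasing'').
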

 \begin{remark}[Comparison with the \iid case]
  If $\bX_i$, $i=1,2,\ldots,$ are independent and $Q_\bX=\sigx^2I_d$, 
 then $Q_{X,n}$ is diagonal, with the diagonal elements being 
  $\sigx^2$. In this case, $\rho_n=\sigx^2$ and 
  $\rho_{n,\wb}=v_{wb}$ and hence the statement of 
  Theorem \ref{thm:pac-bound-squared2}
   boils down to that of Theorem \ref{thm:pac-bound-squared}.
\end{remark}
Before presenting the proof of Theorem \ref{thm:pac-bound-squared2}
some discussion is in order.

Recall that one of the advantages of the error bound of Theorem \ref{thm:pac-bound-squared} was that it
converged to zero as $n \rightarrow \infty$. The question arises if this is the case for 
the error bound of Theorem \ref{thm:pac-bound-squared2}. In order to answer this question
we need  to investigate the dependence on $n$ of the
smallest eigenvalue $\rho_n$ of the covariance matrix $Q_{X,n}$, since
$\rho_n$ is used in the error bound of Theorem \ref{thm:pac-bound-squared2}.
To this end, note that $Q_{X,n}$ is a positive semi-definite matrix, and hence by the properties of
 minimal eigenvalues of positive semi-definite matrices \cite{golub2013matrix}
    $\rho_{n} r^Tr \le  r^TQ_{X,n}r$.
 From \citet{StoicaBook}(Chapter 5, page 135) it follows that
$\rho_{n} \ge \rho_{n-1}$, \ie, $\rho_n$ is a monotonically increasing sequence. In particular, as $\rho_n \le \rho_{1}$ and
$Q_{X,1}=Q_X$, 
$\rho_{1} \|\wb-\wb^*\|_2^2 \le (\wb-\wb^*)^T Q_x (\wb-\wb^{*})$ and
hence $\rho_{n,\wb} \le v_{\wb}$. 
 This means that the right-hand side of Eq.~\eqref{eq:complexity_term_12}
 is not smaller than the right-hand side of Eq.~\eqref{eq:complexity_term_1},
  and Eq.~\eqref{eq:complexity_term_22} is not smaller than 
   Eq.~\eqref{eq:complexity_term_2}.
   
 That is, the error bounds of Theorem \ref{thm:pac-bound-squared2} are not smaller than those of  Theorem \ref{thm:pac-bound-squared}. 
Moreover, $\rho_n \ge 0$ since  it is an eigenvalue of the positive definite matrix $Q_{X,n}$.  
In particular, $\rho_{*}=\lim_{n \rightarrow \infty} \rho_n=\inf_{n} \rho_n$ exists. 

 Then we get the following corollary of Theorem \ref{thm:pac-bound-squared2},
 by noticing that since $\rho_n \ge \rho_*$, 
 $\frac{\exp{\left(\lambda v_{\wb} \right) }}{ \left (1 + \frac{\lambda \rho_{n,\wb}}{\frac{n}{2}} \right)^{\frac{n}{2}}} \le \frac{\exp{\left(\lambda v_{\wb} \right) }}{ \left (1 + \frac{\lambda \rho_{*,\wb}}{\frac{n}{2}} \right)^{\frac{n}{2}}}.$ 

 \begin{cor}
 \label{thm:pac-bound-squared2:col}
  Assume $\rho_* > 0$.
  For any
  prior $\prior$ over $\Fcalw$, any $\delta \in (0,1]$, and any $\lambda>0$, 
and any $\post \text{ over } \Fcalw$,
Eq.~\eqref{eq:pac-bound-squared2} remains true if we replace
$\hat{\Psi}_{\loss,\prior}$ by $\tilde{\Psi}_{\loss,\prior}$, where
    \begin{align*}
        &\hat{\Psi}_{\loss,\prior}(\lambda,n) \le \tilde{\Psi}_{\loss,\prior}(\lambda, n) 
        = \ \ln	\EspModel_{f_\wb\sim\prior} \frac{\exp{\left(\lambda v_{\wb} \right) }}{ \left (1 + \frac{\lambda \rho_{*,\wb}}{\frac{n}{2}} \right)^{\frac{n}{2}}} \,,
    \end{align*}
    with 
    \( v_{\wb}=(\wb^*-\wb)^TQ_x(\wb^* - \wb)+\sigma_{\bEpsilon}^2 \) and
    \( \rho_{*,\wb}=\rho_* (\wb^*-\wb)^T(\wb^* - \wb)+\sigma_{\bEpsilon}^2 \). 
\end{cor}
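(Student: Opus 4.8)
The plan is to derive the corollary from Theorem~\ref{thm:pac-bound-squared2} by two elementary monotonicity arguments, the first of which is essentially the pointwise inequality already displayed just before the statement.

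First I would establish $\hat{\Psi}_{\loss,\prior}(\lambda,n) \le \tilde{\Psi}_{\loss,\prior}(\lambda,n)$. Fix $\wb$. From the monotonicity $\rho_n \ge \rho_*$ (obtained above via the \citet{StoicaBook} reference) together with $\|\wb^*-\wb\|_2^2 \ge 0$ we get $\rho_{n,\wb} \ge \rho_{*,\wb}$. Since $\lambda > 0$ and, because $\sigma_\bEpsilon^2 > 0$, both $\rho_{n,\wb}$ and $\rho_{*,\wb}$ are strictly positive, the bases $1 + \frac{\lambda\rho_{n,\wb}}{n/2}$ and $1 + \frac{\lambda\rho_{*,\wb}}{n/2}$ are positive and the map $x \mapsto x^{n/2}$ is increasing, so the larger value $\rho_{n,\wb}$ produces the larger denominator. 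This yields the pointwise bound
\[
\frac{\exp(\lambda v_\wb)}{\left(1 + \frac{\lambda\rho_{n,\wb}}{\frac{n}{2}}\right)^{\frac{n}{2}}}
\;\le\;
\frac{\exp(\lambda v_\wb)}{\left(1 + \frac{\lambda\rho_{*,\wb}}{\frac{n}{2}}\right)^{\frac{n}{2}}}.
\]
Both sides are nonnegative, so taking $\EspModel_{f_\wb\sim\prior}$ preserves the inequality by monotonicity of the expectation, and applying the increasing function $\ln$ gives $\hat{\Psi}_{\loss,\prior}(\lambda,n) \le \tilde{\Psi}_{\loss,\prior}(\lambda,n)$, which is the first assertion of the corollary.

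Next I would transfer the probabilistic guarantee. Since both $\hat{\Psi}_{\loss,\prior}(\lambda,n)$ and $\tilde{\Psi}_{\loss,\prior}(\lambda,n)$ are deterministic quantities (the data enter only through $\KL(\post\|\prior)$ and $\emplossStat$), and because $\lambda > 0$ with $\tilde{\Psi}_{\loss,\prior}(\lambda,n) \ge \hat{\Psi}_{\loss,\prior}(\lambda,n)$, replacing $\hat{\Psi}$ by $\tilde{\Psi}$ only enlarges the right-hand side of Eq.~\eqref{eq:pac-bound-squared2} for every outcome. Hence the event asserted by Theorem~\ref{thm:pac-bound-squared2} (the one with $\hat{\Psi}$) is contained in the analogous event with $\tilde{\Psi}$ in place of $\hat{\Psi}$. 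By monotonicity of $\ProbData$, the probability of the larger event is at least that of the smaller, which is at least $1-\delta$ by Theorem~\ref{thm:pac-bound-squared2}; this is exactly Eq.~\eqref{eq:pac-bound-squared2} with $\tilde{\Psi}$, completing the proof.

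I do not expect a genuine obstacle: the argument amounts to ``loosening a valid high-probability upper bound keeps it valid.'' The only point needing care is the positivity of the bases before raising them to the fractional power $\frac{n}{2}$, which is what legitimizes the pointwise inequality; this positivity holds uniformly in $\wb$ since $\rho_{*,\wb},\rho_{n,\wb} \ge \sigma_\bEpsilon^2 > 0$. Everything else is monotonicity of $x \mapsto x^{n/2}$, of the expectation, of $\ln$, and of probability.
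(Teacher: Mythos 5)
Your argument is correct and is essentially the paper's own proof, which disposes of the corollary in one line by noting that $\rho_n \ge \rho_* = \inf_n \rho_n$ gives the pointwise inequality between the two integrands and hence $\hat{\Psi}_{\loss,\prior}(\lambda,n) \le \tilde{\Psi}_{\loss,\prior}(\lambda,n)$. You simply make explicit the details the paper leaves implicit (positivity of the bases $1+\lambda\rho_{n,\wb}/(n/2)$, monotonicity of the expectation and of $\ln$, and the inclusion of the high-probability events), so no further comment is needed.
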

 Corollary \ref{thm:pac-bound-squared2:col} gives a PAC-Bayesian bound, asymptotic behavior of which is
 easy to study. Indeed,
 since $1 {+} \frac{\lambda \rho_{*,\wb}}{n/2}$ increases with $n$ and 
 it converges to $\exp(\lambda \rho_{*,\wb})$ as $n \rightarrow \infty$, 
 the error bound $\tilde{\Psi}_{\loss,\prior}(\lambda, n)$
 will decrease with $n$ and 
 \begin{equation}
\label{non-id:asymp}
     \lim_{n \rightarrow \infty} \tilde{\Psi}_{\loss,\prior,}(\lambda, n)
     = \ln	\EspModel_{f_\wb\sim\prior} \exp\left(\lambda (v_{\wb} - \rho_{*,\wb}) \right).
 \end{equation}
 That is, contrary to the \iid case in Theorem \ref{thm:pac-bound-squared2}, 
 PAC-Bayesian error bound  of Corollary \ref{thm:pac-bound-squared2:col}
 decreases with $n$, but it will not converge to $0$, rather, it will
 be bounded from above by the right-hand side of Eq.~\eqref{non-id:asymp}.
 Note that $v_{\wb} - \rho_{*,\wb}=(\wb-\wb^*)^T(Q_x-\rho_*I_d)(\wb-\wb^*)$. 
 The latter is a monotonically increasing function of $Q_x-\rho_{*}I_d$: the smaller this difference is, the close
 the right-hand side of Eq.~\eqref{non-id:asymp} to zero.  The difference $Q_x - \rho_*I_d$  is zero in the \iid case,
 and can be seen as a kind of measure of the degree of dependence of  $\bX_i$, $i=1,2,\ldots,$.

Note that Theorem \ref{thm:pac-bound-squared2} and
 Corollary \ref{thm:pac-bound-squared2:col} are meaningful only for $\rho_n > 0$ and $\rho_* > 0$.

For time series assumption that $\rho_{*} > 0$ is equivalent to 
 $Q_{x,n} > mI_{nd}$ for all $n$ for some $m$. 
This property is mild modification of the well-known property of  
\emph{informativity of the data set $\{\yb_t,\ub_t\}_{t=1}^{\infty}$}  
\cite{LjungBook}. This can be seen by an easy modification of the argument of  \citet{StoicaBook}(Chapter 5, page 122, proof of Property 1).
In turn, informativity of the data set is a standard assumption made
in the literature \cite{LjungBook}, and it is required for learning
ARX models.  Note that under mild assumptions on $\ub_t$, from \citet{LjungBook}[Theorem 2.3] it then follows that 
 the $\emplossStat(f_{\wb}) \rightarrow \genloss(f_{\wb})$ as $n \rightarrow \infty$ with probability one.  That is, even though the law of large numbers does not apply in this case, we still know that the empirical loss converges to the generalization error as $n \rightarrow \infty$. 

\begin{proof}[Proof of Theorem \ref{thm:pac-bound-squared2}]
 The proof follows the same lines as that of Theorem \ref{thm:pac-bound-squared2}.  From Theorem \ref{thm:general-alquier} it follows that
\begin{equation} 
\label{thm:pac-bound-squared2:pf-1}
	\begin{split} 
    & \ProbData\left(\EspModel_{f_\wb\sim\post} \genloss(f_\wb) 
    	 \le \  \EspModel_{f_\wb\sim\post} \emplossStat(f_\wb)  \right. \\
   & 	  \left. + \dfrac{1}{\lambda}\!\left[ \KL(\post\|\prior) +
    	\ln\dfrac{1}{\delta}
    	+ \Psi_{\loss,\prior}(\lambda,n)  \right]\right) \geq 1-\delta\,.
    \end{split}
    \end{equation}
 Consider the random variable $Z_{\wb, i}$ defined in Eq.~\eqref{pred:error_var}. 
Just like in
 the proof of Theorem \ref{thm:pac-bound-squared}, 
 \begin{align}
\label{thm:pac-bound-squared2:pf0}
    &\Psi_{\loss,\prior}(\lambda, n)  = \!
    \ln	\!\!\EspModel_{f_\wb\sim\prior}\! \EspData
    \exp\left[{\lambda \left(\genloss(f_\wb) {-} \emplossStat(f_\wb)\right)}\right] \\
    \nonumber
    & = \ln \EspModel_{f_\wb\sim\prior} \left\{\exp\left(\lambda \genloss(f_{\wb})   \right)  \EspData\exp\left(- \frac{\lambda}{n} \sum_{i=1}^{n} \bZ_{\wb, i}^2  \right) \right\}.
\end{align}
 And, it can be shown that $\bZ_{\wb, i}$ is zero mean Gaussian with variance 
 $\EspData[\bZ_{\wb, i}^2]=v_{\wb}$.  In the proof of Theorem \ref{thm:pac-bound-squared}
 we used the fact that under its assumptions 
  $\{\bZ_{\wb, i}\}_{i=1}^{n}$ were mutually independent and identically distributed  and
  hence $\frac{\lambda v_{\wb}}{n} \sum_{i=1}^{n} \frac{\bZ_{\wb, i}}{v_{\wb}^2}$ had
 $\chi^2$ distribution. In our case, $\bZ_{\wb, i}$ are not independent.
 In order to get around this issue, we define the random variable $\bZ_{\wb,1:n}$ and its covariance matrix $Q_{\wb,n}$\,:
 \[ 
   \begin{split}
      \bZ_{\wb,1:n}&=\begin{bmatrix} \bZ_{\wb, i}, & \ldots, & \bZ_{\wb, n} \end{bmatrix}^T, \\
      Q_{\wb,n}&=\EspData[\bZ_{\wb,1:n} \bZ_{\wb,1:n}^T] \,.
  \end{split}
\]
 It is easy to see that
   $Q_{\wb,n} = D_{\wb}^T Q_{X,n} D_{\wb}+\sigma^2_{\bEpsilon} I_n\,, $
  where 
  \[ D_{\wb}=\mathrm{diag}(\underbrace{(\wb-\wb^{*}) I_d,  \ldots, (\wb-\wb^{*}) I_d}_{\mbox{$n$  times}})\,. \]
  Notice that  $r^T Q_{X,n} r \ge \rho_{n} r^Tr$ for all $r \in \mathbb{R}^d$ by \citet{golub2013matrix}.  Then, 
  for any $z \in \mathbb{R}^n$,  by taking $r=D_{\wb}z$, it follows that 
  \begin{equation}
 \label{thm:pac-bound-squared2:pf2}
     \begin{split}
       z^T Q_{\wb,n}z&=(D_{\wb}z)^T Q_{X,n} (D_{\wb}z)+\sigma^2_{\bEpsilon} z^Tz  \\
      & \geq \ \rho_n (D_{\wb}z)^T(D_{\wb}z) + \sigma^2_{\bEpsilon} z^Tz = \rho_{n,\wb}\,. 
     \end{split}
  \end{equation}
  where we used that $\|D_{\wb}z\|^2_2 = \|\wb-\wb^*\|^2_2 \|z\|^2_2$. 
  Define  
  \[ \bS=Q_{\wb,n}^{-1/2}\bZ_{\wb, 1:n}\,. 
\]
 and let $\bS_i$ be the $i$th entry of $\bS$, \ie,  $\bS=\begin{bmatrix} \bS_1 & \ldots & \bS_n \end{bmatrix}^T$.
Then from Eq.~\eqref{thm:pac-bound-squared2:pf2} it follows that 
\begin{equation*}
  \begin{split}
    \sum_{i=1}^{n} \bZ_{\wb, i}^2 &= \bZ_{\wb, 1:n}^T Q_{\wb,n}^{-1/2} Q_{\wb,n} Q_{\wb,n}^{-1/2} \bZ_{\wb, 1:n} \\
&=  \bS^TQ_{\wb,n}\bS  \ge \bS^T\bS \rho_{n,\wb}=   \Big(\sum_{i=1}^{n} \bS_i^2 \Big) \rho_{n,\wb}  \,.
  \end{split}
\end{equation*}
It then follows that 
\begin{equation}
 \label{thm:pac-bound-squared2:pf4}
  \begin{split}
   \exp\left(-  \frac{\lambda}{n}  \sum_{i=1}^{n} \bZ_{\wb, i}^2\right) \le \exp\left(-\frac{\lambda}{n} \rho_{n,\wb}  \sum_{i=1}^{n} \bS_i^2 \right).
  \end{split} 
\end{equation}
 Notice now that $\bS$  is Gaussian and zero mean, with covariance 
      $\EspData[\bS\bS^T]=Q_{\wb,n}^{-1/2} \EspData[\bZ_{\wb,  1:n}\bZ_{\wb, 1:n}^T]Q_{\wb,n}^{-1/2} = I_n$.
 That is, the random variables $\bS_i$ are normally distributed and $\bS_i,\bS_j$ are independent, and therefore $\sum_{i=1}^{n} \bS_i^2$ has $\chi^2$ distribution. Hence,
\[ 
    \EspData\left[\exp\left(-\frac{\lambda \rho_{n,\wb}}{n}  \sum_{i=1}^{n} \bS_i^2 \right)\right]=\frac{1}{(1+\frac{\lambda\rho_{n,\wb}}{\frac{n}{2}})^{\frac{n}{2}}}\,.
\]
Combining this with Eq.\,\eqref{thm:pac-bound-squared2:pf4} and \eqref{thm:pac-bound-squared2:pf0}, Eq.\,\eqref{thm:pac-bound-squared2:pf-1} implies Eq.\,\eqref{eq:complexity_term_12}. 
By using the inequality
$\big(1 {+} \frac{a}{b} \big)^b {>} e^\frac{ab}{a+b}$
for
$a, b {>} 0$
with $a{=}\lambda\rho_{n,\wb}$ and $b{=}\frac{n}{2}$, 
Eq.\,\eqref{eq:complexity_term_22} follows from Eq.\,\eqref{eq:complexity_term_12}.
\end{proof}

\subsection{Related works}

Note that PAC bounds for learning time series has been explored in the literature by \citeauthor{kuznetsov2017generalization} (\citeyear{kuznetsov2017generalization,kuznetsov2018theory}). 
Their approach is based on covering numbers and Rademacher complexity instead of PAC-Bayes analysis, but in contrast to the current paper, \citeauthor{kuznetsov2017generalization}'s work allows for non-stationary time series. 

\citet{alquier2012model} includes a PAC-Bayesian analysis in their model selection procedure for time series. Among other differences, they provide oracle inequalities type of bounds, whereas our analysis provides generalization bounds relying on the empirical loss.

\section{Conclusion}
We have presented an improved PAC-Bayesian error bound for linear regression 
and extended this error bound to the case of non \iid observations. Thus, the obtained bound  
applies to the learning problem of time series using ARX models, which can be viewed as a simple yet non-trivial subclass of recurrent neural network regressions. 
For this reason, we are hopeful that the results of Section~\ref{extension} could potentially lead to PAC-Bayesian bounds for recurrent neural networks.

\section{Acknowledgement}
This work is funded in part by CNRS project PEPS Blanc INS2I 2019 BayesReaForRNN, in part by CPER Data project, co-financed by European Union, European Regional Development Fund (ERDF), French State and the French Region of Hauts-de-France,  and in part by the French project \mbox{APRIORI} ANR-18-CE23-0015.

\appendix
\section{Mathematical details}
\label{appendix}

\subsection{Proof of Theorem~\ref{thm:general-alquier}}
\label{appendix:proof-general-alquier}
\begin{proof} 

The PAC-Bayesian theorem is based on the following \emph{Donsker-Varadhan's change of measure}. 

For any measurable function $\phi:\Fcal\to\Rbb$, we have
	 $\EspModel_{f\sim\post}  \phi(f) \leq \KL(\post\|\prior) + \ln\left( \EspModel_{f\sim\prior} e^{\phi(f)}\right).$
	Thus, with
	$\phi(f){=} \lambda\big(\genloss(f){-}\emplossStat(f)\big)$, we obtain
	$\forall\, \post \mbox{ on }\Fcal$\,:
	\begin{align}
	\nonumber
	  \EspModel_{f\sim\post} & \lambda\, \big(\genloss(f)-\emplossStat(f)\big)\\
		&\leq \KL(\post\|\prior) + \ln \bigg(
	 \EspModel_{f\sim\prior}  e^{\lambda\, \big(\genloss(f)-\emplossStat(f)\big)}
		\bigg)\,. \label{eq:newhope}
	\end{align}
	Let's consider the random variable 
	$\xi \,{=} \!\displaystyle\EspModel_{f\sim\prior} \!\!e^{\lambda \big(\genloss(f)-\emplossStat(f)\big)}\,.$
	By the Markov inequality, we have
	\begin{equation*}
	\ProbData \left(
	\xi\,\le\,
	\frac{1}{\delta}\EspData \xi
	\right)\, \geq\, 1-\delta\,,
	\end{equation*}
	which, combined with Eq.~\eqref{eq:newhope}, gives
		\begin{align*}
	  	\ProbData \left(\EspModel_{f\sim\post} \lambda\, \big(\genloss(f)-\emplossStat(f)\big)
		\leq \KL(\post\|\prior) + \ln \bigg(
	 \frac{1}{\delta}\EspData \xi
		\bigg)\right)\\
		\geq 1-\delta\,.
	\end{align*}
	By rearranging the terms of above equation, we obtain the following equivalent form of the statement of the theorem:
		\begin{align*}
        \ProbData\Bigg( &
    \EspModel_{f\sim\post} \genloss(f) 
    	 \le \  \EspModel_{f\sim\post} \emplossStat(f)   \\[-2mm]
    	 &\   + \dfrac{1}{\lambda}\!\left[ \KL(\post\|\prior) +
    	\ln \bigg(
	 \frac{1}{\delta}\EspData \xi
		\bigg)\right] \Bigg) \geq 1-\delta \,.
    \end{align*}
    To see that the inequality above is
    equivalent to the statement of the theorem,
    note that by Fubini's theorem,
    $$\EspData \xi=\EspData\EspModel_{f\sim\post} e^{\lambda \big(\genloss(f)-\emplossStat(f)\big)}=\EspModel_{f\sim\post} \EspData e^{\lambda \big(\genloss(f)-\emplossStat(f)\big)},$$
      and hence
     $\ln \EspData \xi =\Psi_{\ell,\prior}(\lambda,n)$.
     Moreover, 
    $\ln(
	 \frac{1}{\delta}\EspData \xi)=\ln \frac{1}{\delta} + \ln \EspData \xi$.
\end{proof}

\subsection{Details leading to Eq.~\eqref{eq:leqL}}
 For any $f\in\Fcal$:
\begin{align*}
    &\EspData
    	\exp\left[{\lambda\left(\genloss(f) - \emplossStat(f)\right)}\right]\\
    	=&
    \EspData 
    	e^{\frac\lambda{n} \sum_{i=1}^n \left(\EspData \ell(f(\bX_k),\bY_k) - \ell(f(\bX_i),\bY_i)\right)}\\
    	=&
    \EspData 
    	\prod_{i=1}^n e^{\frac\lambda{n} \left(\EspData \ell(f(\bX_k),\bY_k) - \ell(f(\bX_i),\bY_i)\right)}\\
    	\textrm{($\bX_i,\bY_i$ \iid)}=&
    	\prod_{i=1}^n \EspData  e^{\frac\lambda{n} \left(\EspData \ell(f(\bX_k),\bY_k) - \ell(f(\bX_i),\bY_i)\right)}\\
    	\textrm{(Hoeff.)}\leq&
    	\prod_{i=1}^n  \exp\left[{\frac{\lambda^2 L^2}{8 n^2} }\right] \\
    	= &
    	 \exp\left[{\frac{\lambda^2 L^2}{8 n} }\right],
\end{align*}
where the line (Hoeff.) is obtained from Hoeffding's lemma on the random variable 
$\left(\genloss(f) - \ell(f(\bX_i),\bY_i)\right) \in [-\genloss(f), L-\genloss(f) ]$, which has an expected value of zero.

\bibliography{biblio.bib}
\bibliographystyle{plainnat}

\end{document}